\newtheorem{theorem}{Theorem}[section]
\newtheorem{proposition}[theorem]{Proposition}
\newtheorem{definition}[theorem]{Definition}
\def\gD{{\mathcal{D}}}
\def\gS{{\mathcal{S}}}
\def\gT{{\mathcal{T}}}
\def\gL{{\mathcal{L}}}
\def\gX{{\mathcal{X}}}
\def\gY{{\mathcal{Y}}}
\def\gZ{{\mathcal{Z}}}
\DeclareMathOperator*{\argmax}{arg\,max}
\DeclareMathOperator*{\argmin}{arg\,min}
\newcommand{\cmark}{\ding{51}}%
\newcommand{\xmark}{\ding{55}}%
\def\eqref#1{equation~\ref{#1}}
\def\1{\bm{1}}
\DeclareMathAlphabet{\mathsfit}{\encodingdefault}{\sfdefault}{m}{sl}
\SetMathAlphabet{\mathsfit}{bold}{\encodingdefault}{\sfdefault}{bx}{n}
\def\gD{{\mathcal{D}}}
\def\gI{{\mathcal{I}}}
\def\gL{{\mathcal{L}}}
\def\gS{{\mathcal{S}}}
\def\gT{{\mathcal{T}}}
\def\gW{{\mathcal{W}}}
\def\gX{{\mathcal{X}}}
\def\gY{{\mathcal{Y}}}
\def\gZ{{\mathcal{Z}}}
\newcommand{\R}{\mathbb{R}}
\newcommand{\ours}[0]{\texttt{MISTS}\xspace}
\newenvironment{myquotation}{\setlength{\leftmargini}{0em}\quotation}{\endquotation}
\title{Enhancing Evolving Domain Generalization through Dynamic \\ Latent Representations}
\author {
    % Authors
    Binghui Xie\textsuperscript{\rm 1},
    Yongqiang Chen\textsuperscript{\rm 1},
    Jiaqi Wang\textsuperscript{\rm 1},
    Kaiwen Zhou\textsuperscript{\rm 1},
    Bo Han\textsuperscript{\rm 2},
    Wei Meng\textsuperscript{\rm 1},
    James Cheng\textsuperscript{\rm 1}
}
\title{My Publication Title --- Single Author}
\author {
    Author Name
}
\begin{document}

\maketitle

\begin{abstract}
Domain generalization is a critical challenge for machine learning systems. Prior domain generalization methods focus on extracting domain-invariant features across several \textit{stationary domains} to enable generalization to new domains. However, in \textit{non-stationary} tasks where new domains evolve in an underlying continuous structure, such as time, merely extracting the invariant features is \textit{insufficient} for generalization to the evolving new domains. Nevertheless, it is non-trivial to learn both evolving and invariant features within a single model due to their conflicts. To bridge this gap, we build causal models to characterize the distribution shifts concerning the two patterns, and propose to learn both \textit{dynamic} and \textit{invariant} features via a new framework called Mutual Information-Based Sequential Autoencoders (\ours). \ours adopts information theoretic constraints onto sequential autoencoders to disentangle the dynamic and invariant features, and leverage a domain adaptive classifier to make predictions based on both evolving and invariant information. Our experimental results on both synthetic and real-world datasets demonstrate that \ours succeeds in capturing both evolving and invariant information, and present promising results in evolving domain generalization tasks.
\end{abstract}

\section{Introduction}
\label{sec:intro}
Domain generalization (DG) is a critical challenge for machine learning systems that requires model to generalize beyond the assumption that training and testing data come from identical and independent distributions~\citep{bengio2019meta}.
To address the issue, most previous DG methods focus on extracting domain-invariant features across several \textit{stationary} source domains~\citep{CORAL,deep_DG,groupdro,irmv1}. Nevertheless, domains could also be non-stationary and evolve along with certain structures~\cite{Wang22,DBLP:conf/icml/Qin0L22,wildtime,gagnon2022woods}. For instance, banks assess whether a person is likely to default on a loan by examining factors such as income, career, and marital status. However, as society changes over time, it is desirable to take account of predictable trends along with the time \citep{bai2022temporal} (i.e., concept shift) when making predictions for new customers. Moreover, these factors, e.g., income and career types, will also change gradually due to social developments (i.e., covariate shift). Figure~\ref{fig:portrait} shows another example of historical images of US high school students. Besides temporal factors, the data collected can also evolve along with geometry, transformation, and other factors \citep{DBLP:conf/icml/Qin0L22}. 
The task of generalizing under such shifts, known as \textit{Evolving Domain Generalization} (EDG), involves training models on examples from sequential source domains to generalize well to evolving unseen target domains \cite{Wang22,DBLP:conf/icml/Qin0L22}.

To tackle EDG, a common belief is to inherit the spirit of domain invariant learning and to learn evolving domain \textit{invariant features} for generalizing to new domains \citep{gagnon2022woods}. In contrast, plentiful empirical studies challenged the position and showed that it is also needed to additionally learn the dynamic information which is useful for predicting evolving domain patterns \citep{Wang22,bai2022temporal}. These phenomena raise a challenging research question: 
\begin{myquotation}
\textit{What features do we need to learn for successful EDG?}
\end{myquotation}

\begin{figure}[t]
   \begin{center}
\includegraphics[width=0.46\textwidth]{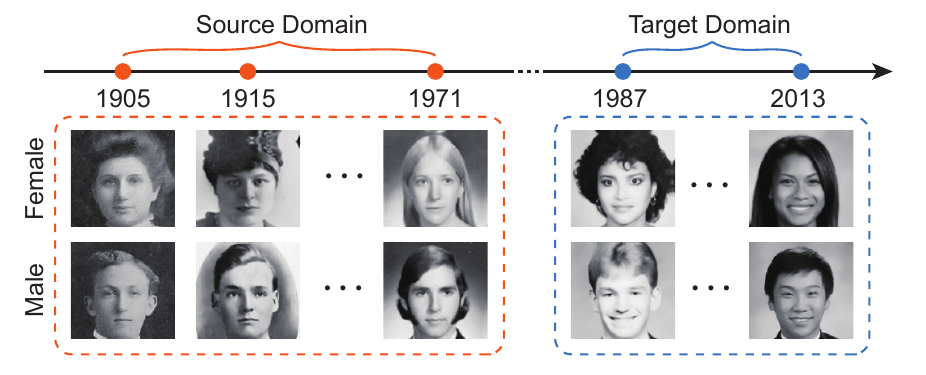}
  \end{center}
  \caption{An example of Evolving Domain Generalization on Portraits\citep{ginosar2015century}.The dataset consists of historical images of US high school students, and as time progresses, the visual attributes captured in the photos, such as hair type and clothing style, gradually change.}
  \label{fig:portrait}
\end{figure}
We consider the problem from the feature learning perspective. Specifically, we build Structural Causal Models to characterize the distribution shifts caused by the underlying invariant and dynamic factors in EDG.
Our causal analysis shows that it is essential to learn both dynamic and invariant features for better EDG.

Nevertheless, it is non-trivial to learn both
invariant and dynamic features within a single model due to their conflicts~\citep{DBLP:conf/icml/Qin0L22}.
To this end, we propose a principled EDG method called Mutual Information-Based Sequential Autoencoders (\ours). Specifically, \ours adopts a variational inference strategy to identify the underlying invariant and dynamic features. To encourage more complete separation between the invariant and dynamic latent representations, \ours employs a novel information-theoretic objective that minimizes the mutual information between them. We theoretically show the new objective of \ours is a valid evidence lower bound (ELBO) of the data log-likelihood with respect to our causal models. Our major contributions are as follows:
\begin{itemize}
    \item We provide theoretical evidence showing that either learning invariant or dynamic features is insufficient for EDG.
    \item We then propose a novel framework \ours to extract invariant and dynamic representations simultaneously and separately.
    \item We conduct extensive experiments on various EDG benchmarks. The results confirm that learning both invariant and dynamic features in \ours provides a better generalization ability on unseen evolving domains.
\end{itemize}

\section{Related Work}
\label{sec:related_work}
\subsection{Domain Generalization}
A rich literature is dedicated to addressing the OOD generalization challenge, which often involves introducing additional regularizations to Empirical Risk Minimization (ERM)~\citep{erm}.
Researchers such as \citet{DANN,CORAL,deep_DG,DouCKG19,chen2023pair} have explored regularization of learned features to be \textbf{domain-invariant}, while others such as \citet{dro,DRSL,groupdro,chen2023fat} have focused on regularizing models to be \textbf{robust to mild distributional perturbations} in the training distributions.
Similarly, researchers such as \citet{zhang2021causaladv,jtt,cnc,lisa,gala} have proposed improving robustness with additional assumptions.
Recently, there has been a growing interest in adopting causality theory \citep{causality,towards_causality} and introducing causal invariance to representation learning \citep{inv_principle,irmv1,env_inference,andmask,clove,ib-irm,chen2023does}.
These approaches require the learned representation to be \textbf{causally invariant}, such that a predictor acting on minimizes the risks of all environments simultaneously. 
In addition, approaches such as \citet{iga,vrex,fish,fishr} have implemented invariance by \textbf{encouraging agreements} at various levels across environments.
The aforementioned studies focus on extracting the invariant features across multiple domains while disregarding other features.
However, \citet{wildtime} provide extensive empirical evidence showing that the existing invariant learning methods may not be suitable for non-stationarity environments. This work focuses on OOD generalization under non-stationarity environments.

Besides, some methods are proposed to utilize domain-dependent features for better OOD generalization ability, e.g., \citep{chattopadhyay2020learning,zhang2021adaptive, bui2021exploiting,zhang2023aggregation}. Yet, they still treat domain index as a \textbf{discrete variable}, and can not learn the evolving drift across the domains.

\subsection{Evolving Domain Generalization}
Recently, many works have been proposed to tackle the challenging Evolving Domain Generalization or Temporal Domain Generalization, which can be further divided into two categories. The first line of work is primarily inspired by domain-dependent methods. For example, 
\citet{Wang22} propose to learn a transformation between domains via meta-learning. However, they assumed that the sequential domains \textbf{evolve consistently}, i.e., there exists an explicit function transformations between the domains, which does not always hold for real-world datasets. \citet{GI} introduce a temporal DG algorithm with gradient interpolation (GI) that trains models to predict near-future data by learning how the activation function evolves over time. Since GI only focuses on the change of activation functions, GI has \textbf{limited power} in characterizing model dynamics \citep{bai2022temporal}. Motivated by this, \citet{bai2022temporal} further propose to use dynamic graphs over a recurrent structure to capture the evolving dynamics of model parameter distributions. However, the method requires adjusting the model parameters by updating the weights of edges between neurons as a graph, which can be \textbf{computationally expensive} for large datasets and models. Meanwhile, \citet{DBLP:conf/icml/Qin0L22} propose LSSAE to model the underlying latent variables in data sample space. LSSAE disentangles the latent variables into invariant variables and dynamic variables. However, LSSAE still focuses on using the \textbf{invariant features} for prediction. In contrast, we incorporate a more rigorous analysis to demonstrate the importance of incorporating dynamic features for classification.

\subsection{Continuous Domain Adaptation}
The issue of continuous domain adaptation, or evolving domain adaptation, has garnered increasing attention in recent years. Various CDA methods have been developed, such as \cite{Hoffman2014CVPR,Wang2020CIDA,Long2020EDA,Lao2020Continuous}. Meanwhile, some intermediate-domain-based approaches \cite{Kumar2020ICML,Chen2021NIPS,wang2022understanding} are also known as gradual domain adaptation. However, these methods require data samples from target or intermediate domains for adaptation. Our focus is on the domain generalization task in evolving settings, where no information from target domains is accessible during training.

\section{Backgroud and Motivation}
\label{sec:motivation}

\begin{figure}[t]
  \begin{center}
   \includegraphics[width=0.3\textwidth]{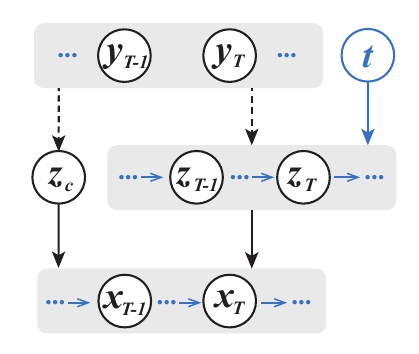}
  \end{center}
  \caption{The directed acyclic graph depicting our generative model. Dashed lines indicate the causal direction is possible for either side.}
  \label{fig:sem}
\end{figure}
\subsection{Problem Definition}
We consider the domain generalization tasks in which we have a sequence of evolving source domains $\mathcal{S}=\{ \mathcal{D}_1,\mathcal{D}_2,\dots,\mathcal{D}_T \}$, where each domain $\mathcal{D}_t=\{{(x_{i}^{(t)},y_{i}^{(t)})\}_{i=1}^{n_{t}}}$. $x_{i}^{(t)}\in\gX,y_{i}^{(t)}\in\gY$ and $n_{t}$ denote the input data, label and data size of domain $t \in \{1,2,\dots,T\},$ respectively. Our goal is to train a model $f_\theta:\gX\rightarrow\gY$ on source domains $\mathcal{S}$ to predict well on the evolving unseen target domains $\mathcal{T}=\{ \mathcal{D}_{T+1},\mathcal{D}_{T+2},\dots\}.$ For clarity, we will omit the index $i$ and $t$ for notations involving only a single data point or domain. EDG assumes that the domain distribution is changing following some sequential patterns~\citep{Wang22,DBLP:conf/icml/Qin0L22,bai2022temporal}.

\subsection{Feature Learning for Successful EDG}
\label{sec:prolem_def}
To study what features the model should learn for successful EDG, we first build the structural causal models to characterize the distribution shifts in EDG, shown in Figure.~\ref{fig:sem}.
The label $y \in \{1, -1\}$ is randomly sampled from the uniform distribution at the label space. Then, the label $y$ further controls the generation of the latent variable $z=[z_c,z_t]^T$with respect to domain $t$, where $z$ is composed of a domain invariant part $z_c$ and a dynamic part $z_t$:
\begin{equation}
\label{eq:problem}
    z_c \sim \mathcal{N}(y\cdot \mu_c,\,\sigma^{2}_c I), \quad z_t \sim \mathcal{N}(y\cdot \mu_t,\,\sigma^{2}_t I),
\end{equation}
where $\mu_c \in \mathbb{R}^{d_c}$ and $\mu_t \in \mathbb{R}^{d_t}$ are the mean of latent invariant and dynamic features, respectively. $\mu_t$ is generated conditioned on previous $\mu_{<t},$ and the condition relationship can be parameterized by neural networks. Furthermore, the latent features $z_c$ and $z_t$ control the generation of invariant and dynamic patterns of the input feature $x$ via an injective function of the latent features $z_c$ and $z_t$, i.e., $x=g(z_c,z_t)$.
Although we have presented the model as the distribution of $z_c$ and $z_t$ conditioned on $y$, the causal directions can be viewed either way. In general, DG frameworks suppose to learn a featurizer $\varphi:\gX\rightarrow\gZ$, such that there exists a classifier acting on $w:\gZ\rightarrow\gY$ such that:
% \vskip -0.1in
\begin{equation}
\label{eq:risk}
	\ \sum_{t\in\gT}\gL_t(y, w\circ \varphi(x)),
\end{equation}
% \vskip -0.1in
where $\gL_t$ represents logistic or 0-1 loss. Prior invariant methods are designed to achieve robust performance on target domains by ignoring non-invariant features, i.e., by using an invariant classifier $[2\mu_c / \sigma_c^2,0]^T$ in combination with $[z_c, 0]$ \citep{irmv1,rosenfeld2020risks}. However, in the context of EDG, non-invariant features have a meaningful correlation with the label. Restricting the model to $z_c$ will further limit its generalization ability in EDG. The theoretical results are informally presented in Theorem~\ref{thm:th1}. Details are deferred to the Appendix.

\begin{theorem} (Informal)
\label{thm:th1}
In the linear setting of Eq.~\ref{eq:problem}, for any domain $t$, there exists a classifier $w_t$ acting on $z_c$ and $z_t$ that achieves a lower risk than the optimal classifier $w_*$ acting on $[z_c,0]$.
\end{theorem}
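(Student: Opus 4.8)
The plan is to recognize that, under the linear generative model of Eq.~\ref{eq:problem}, classifying from the latent code $z=[z_c,z_t]^\top$ is a Gaussian discriminant problem with balanced classes, and to compare the Bayes-optimal risk of a classifier that sees all of $z$ against that of one restricted to the invariant block $[z_c,0]^\top$. Since $g$ is injective, a classifier on $z$ is realizable as a classifier on $x$, so it suffices to argue in latent space. The crucial structural fact is that here the Bayes-optimal rule is \emph{linear and homogeneous} in both cases — the class-conditionals are Gaussians with equal priors and equal block-spherical covariance $\Sigma=\mathrm{diag}(\sigma_c^2 I_{d_c},\sigma_t^2 I_{d_t})$ — so the optimum already lies inside the class of linear classifiers that the statement quantifies over.

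For the $0$--$1$ loss I would first compute the population risk of the homogeneous rule $z\mapsto\sign(w^\top z)$: conditioning on $y=1$ gives $w^\top z\sim\mathcal{N}(w^\top\mu,\,w^\top\Sigma w)$ with $\mu=[\mu_c,\mu_t]^\top$, so the risk is $\Phi\!\big(-\,w^\top\mu/\sqrt{w^\top\Sigma w}\big)$ with $\Phi$ the standard normal CDF. Maximizing the generalized Rayleigh quotient $w^\top\mu/\sqrt{w^\top\Sigma w}$ (Cauchy--Schwarz between $\Sigma^{1/2}w$ and $\Sigma^{-1/2}\mu$) gives maximum $\sqrt{\mu^\top\Sigma^{-1}\mu}$ at $w\propto\Sigma^{-1}\mu$, which by the block structure equals $\sqrt{\|\mu_c\|^2/\sigma_c^2+\|\mu_t\|^2/\sigma_t^2}$. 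Repeating the computation with $w$ supported on the $z_c$-block yields maximal quotient $\|\mu_c\|/\sigma_c$; moreover this restricted optimizer $\sign(\mu_c^\top z_c)$ (equivalently $w_*=[2\mu_c/\sigma_c^2,0]^\top$) is the \emph{same across all domains}, since $\mu_c,\sigma_c$ are domain-invariant, so $w_*$ incurs risk exactly $\Phi(-\|\mu_c\|/\sigma_c)$ in every domain $t$. Taking $w_t\propto\Sigma^{-1}\mu$ with the domain-$t$ value of $\mu_t$, strict monotonicity of $\Phi$ gives $\Phi\!\big(-\sqrt{\|\mu_c\|^2/\sigma_c^2+\|\mu_t\|^2/\sigma_t^2}\big)<\Phi(-\|\mu_c\|/\sigma_c)$ whenever $\mu_t\neq 0$.

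For the logistic loss I would instead use that the minimum population log-loss over all measurable scores equals the conditional entropy of the label given the features, attained at the log-odds, which here is again linear, $f^\star(z)=2\mu^\top\Sigma^{-1}z$. The comparison then reduces to $H(y\mid z_c,z_t)$ versus $H(y\mid z_c)$, and the chain rule gives $H(y\mid z_c,z_t)=H(y\mid z_c)-I(y;z_t\mid z_c)$. Because $z_t\condind z_c\mid y$ while the class-conditional law of $z_t$ is $\mathcal{N}(\pm\mu_t,\sigma_t^2 I)$, the term $I(y;z_t\mid z_c)$ is strictly positive exactly when $\mu_t\neq 0$, giving the strict improvement; the two loss cases are thus parallel (a monotone transform of the same signal-to-noise quantity).

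The calculations are routine; the points needing care are (i) verifying that the Bayes-optimal rule is linear and homogeneous, so that ``there exists a classifier $w_t$'' is not vacuous and $w_*$ is genuinely optimal in its class — this is where balanced priors and equal within-block covariance enter; (ii) making explicit the standing assumption that $\mu_t\neq 0$ in the domain under consideration, as otherwise $z_t$ carries no label information and no improvement is possible; and (iii) observing that although $\mu_t$ (hence $w_t$) depends on $t$, the argument is pointwise in $t$, so it transfers verbatim to the summed risk of Eq.~\ref{eq:risk} and formalizes why a single invariant classifier on $z_c$ is dominated by a domain-adaptive classifier on $[z_c,z_t]$.
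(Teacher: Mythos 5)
Your proposal is correct, but it proves the claim by a genuinely different route than the paper. The paper first derives the Bayes log-odds $2\mu^\top\Sigma^{-1}z$ (hence $w_*=[2\mu_c/\sigma_c^2,0]^\top$) exactly as you do, but then establishes the strict gap abstractly: it states a lemma valid for \emph{any} distribution $p(z)$ with linear Bayes log-odds, arguing that any classifier $\sigma(\beta_S^\top z_S)$ supported on a strict feature subset must disagree with the pointwise-optimal Bayes rule $\sigma(\beta^\top z)$ on the positive-measure event $\{\beta_{-S}^\top z_{-S}\neq 0\}$, and therefore incurs strictly higher logistic risk (and $0$--$1$ risk under an extra sign/magnitude condition). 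You instead exploit the Gaussian structure quantitatively: for $0$--$1$ loss you get closed-form risks $\Phi\bigl(-\sqrt{\mu^\top\Sigma^{-1}\mu}\bigr)$ versus $\Phi(-\|\mu_c\|/\sigma_c)$ via the Rayleigh quotient, and for logistic loss you invoke $H(y\mid z_c,z_t)=H(y\mid z_c)-I(y;z_t\mid z_c)$ with $I(y;z_t\mid z_c)>0$ when $\mu_t\neq 0$. The paper's argument is shorter and covers arbitrary strict subsets $S$ and more general $p(z)$, but it is terse about why disagreement implies a \emph{strict} risk increase for $0$--$1$ loss (one needs $P(y=1\mid z)\neq 1/2$ off a null set, which holds here); your version supplies the exact risk gap and makes the necessary condition $\mu_t\neq 0$ explicit, which the paper only encodes implicitly through the hypothesis $P(\beta_{-S}^\top z_{-S}\neq 0)>0$. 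Your observation that the Bayes rule is homogeneous linear (so the comparison class is not vacuous and $w_*$ is genuinely optimal on $[z_c,0]$) mirrors the paper's IRM derivation and is the right thing to check.
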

Theorem~\ref{thm:th1} implies that additionally using dynamic features can achieve better OOD generalization ability, which explains the success of EDG using dynamic features~\citep{bui2021exploiting}. Instead of learning sole invariant or dynamic features, Theorem~\ref{thm:th1} demonstrates that it is essential to learn both features for successful EDG.

\subsection{Harnessing Dynamic Features}
To effectively utilize dynamic features, we aim to find a feature space that distinguishes between $z_c$ and $z_t$, enabling the model to learn the evolving pattern of $z_t$. One common approach is to employ a multi-classification head, as demonstrated in \citep{bui2021exploiting}, to capture invariant and domain-specific features, respectively. However, the multi-classification head treats domains as discrete indices, hindering the ability to learn dynamic patterns. Furthermore, this method struggles to cleanly separate $z_c$ and $z_t.$

In summary, there are several challenges in harnessing $z_t$. The first challenge is \textbf{how to capture $z_t$ and learn the evolving dynamics from $z_{1:T}.$} Drawing inspiration from the literature on Sequential Autoencoders, we consider employing a probabilistic framework that utilizes variational inference to identify the latent structures of $z_c$ and $z_t.$  However, a \textbf{purely sequential autoencoder fails to cleanly separate $z_c$ and $z_t$}, leading to inaccurate learning of the evolving dynamics. Therefore, we propose an information-theoretical regularizer to minimize the mutual information between $z_c$ and $z_t$. Additionally, \textbf{the conditional probability $P(Y|z_c, z_t)$ varies across different domains}, making it difficult for a stationary classifier to account for this drift. Therefore, to address these variations, we employ an adaptive classifier $w_t$ that operates on top of $C(z_c, z_t)$, where $C$ represents a combination function. In practical implementations, the combination function is often simplified by using concatenation.

\section{Method}
\label{sec:method}
In this section, we will introduce our proposed method Mutual Information-Based Sequential Autoencoders, which jointly extract \textit{both} the invariant and dynamic features and train domain-adaptive classifiers onto the extracted features to achieve better generalization in EDG.

\subsection{Probabilistic Modeling}
Specifically, we can define the following probabilistic generative model for the input data of all source domains as
\begin{equation}\label{eq:joint_xy}
\begin{split}
p&(x_{1:T},y_{1:T},z_c,z_{1:T}, w_{1:T}) \\
&= p(x_{1:T}, z_{1:T}, z_c) p(y_{1:T}, w_{1:T}| z_{1:T}, z_c).
\end{split}
\end{equation}
where x in Eq.~\ref{eq:joint_xy} separates the generation of $(x_{1:T},y_{1:T})$ into two parts via the decomposition of $z_{1:T}$ into the invariant feature $z_c$ and the dynamic feature $z_t$.
Modeling the first term $p(x_{1:T}, z_{1:T}, z_c)$ elicits proper disentanglement of $z_c$ and $z_t$ at the latent space. Modeling the second term $p(y_{1:T}, w_{1:T}| z_{1:T}, z_c)$ is essentially to leverage the disentangled features to predict the labels.
In what follows, we will detail how to model the two terms for better EDG.

\begin{figure*}[ht]
    \centering
	\includegraphics[width=0.9\textwidth]{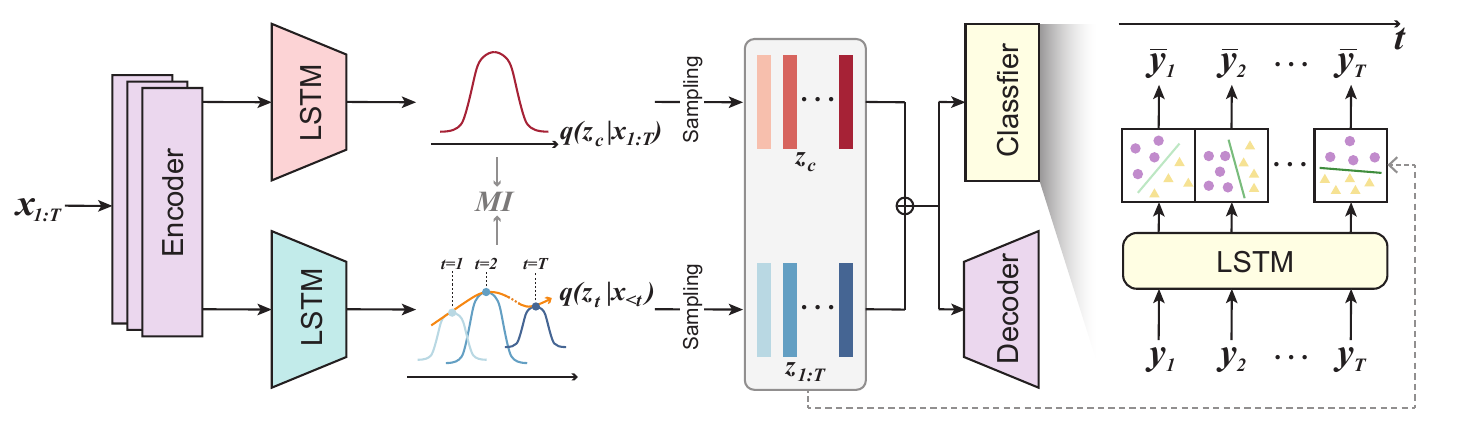}
	\caption{The \ours framework starts by encoding the input data into the latent space, where two distinct LSTMs parameterize the corresponding posteriors to obtain the invariant and dynamic latent representations, denoted as $z_c$ and $z_{1:T}$, respectively. These representations are then passed through a decoder and an adaptive classifier to compute the reconstruction loss and classification loss, respectively. To encourage better disentanglement, mutual information (MI) terms are applied to the invariant, dynamic latent variables, and input data. The KL-divergence terms are omitted for simplicity.}
 \label{fig:model}
\end{figure*}
\subsection{Identify Latent Representations}
\label{sec:encoder}
To model $p(x_{1:T}, z_{1:T}, z_c)$ for disentanglement, we can further decompose the first term by the chain rule of probability, as the following:
\begin{align}
\label{eq:gen_x}
    &p(x_{1:T}, z_{1:T}, z_c)=p(z)p(x_{1:T}|z)  \nonumber \\
    &=\bigg[p(z_c)\prod_{t=1}^T p(z_t|z_{<t})\bigg] \cdot \prod_{t=1}^T p(x_t|z_t, z_c).
\end{align}
Eq.~\ref{eq:gen_x} shows that the generation process of domains data $x_t$ at timestamp t depends on the corresponding dynamic latent representation $z_t$ and invariant representation $z_c.$ Moreover, the distribution of $z_t$ is conditional on $z_{<t}$ from the history domains. For instance, $z_c$ can represent the facial contour while $z_t$ denotes the visual attributes which follow fashion trends change over time. 

Our objective here is to extract the latent representations given only the observed data $x_{1:T}.$ Therefore, we hope to learn a posterior distribution as follows:
\begin{equation}
    q(z_{1:T},z_c|x_{1:T}) =
    q(z_c|x_{1:T}) \displaystyle\prod_{t=1}^T
    q(z_t|z_{<t},x_t),
\end{equation}
which has a corresponding variational lower bound as follows:
\begin{align}
\label{eq:vae}
\max_{p,q} \; \mathbb E_{q} \left[\log p(x_{1:T}|z)- KL[q(z|x_{1:T})||p(z)] \right],
\end{align}
where $p$ and $q$ can be parameterized by recurrent neural network, and $KL$ denotes KL divergence. The first term denotes the reconstruction term for input data $x_t$, the second and third terms denote KL divergence which is to align the posterior distributions $z_c$ and $z_t$ with the corresponding prior distributions.

Although forcing the dynamic variable $z_t$ to predict dynamic factors can guarantee that $z_t$ contains adequate dynamic information, it fails to guarantee that $z_t$ excludes the static information $z_c$. For example, if the learned dynamic factors $\hat z_t$ encompasses $z_c,$ the invariant factor $\hat z_c$ will lose some invariant information while $(\hat z_c, \hat z_{1:T})$ still reconstructs the data samples equally well as $(z_c, z_{1:T})$ up to Eq.~\ref{eq:vae}. Consequently, such disentanglement results can lead to poor performance in the downstream classification tasks \citep{ilse2020diva,DBLP:conf/icml/Qin0L22}. To solve the problem, \citet{DBLP:conf/icml/Qin0L22} used a temporal domain constraint to limit the information carried by $z_t.$ However, the constraints will harm the encoding of $z_t,$ and fail to capture complex dynamics. 

From the information-theoretic perspective, the problem can be considered as clearly separating $z_c$ and $z_t$, under the constraint that $z_t$ and $z_c$ should contain useful information from $x_t$ in a specified domain \citep{han2021disentangled,akuzawa2021information,DBLP:conf/nips/BaiWG21}. Specifically, we propose to additionally minimize the mutual information between $z_t$ and $z_c$ while maximizing the mutual information between the latent representation and the data observed, such that the $z_t$ and $z_c$ carry mutually exclusive meaningful information on input data:
\begin{equation}
\label{eq:vae_mi}
\begin{aligned}
\max_{p,q} \; & \mathbb E_{q} \left[\log p(x_{1:T}|z)- KL[q(z|x_{1:T})||p(z)] \right] \\
 \text{s.t.}\; &p,q\in \argmin I_q (z_t;z_c),\\
&p,q\in \argmax  I_q (z_;x_t),
\end{aligned}
\end{equation}
where the mutual information  terms are defined as:
\begin{align}
\label{eq:mi_term}
    I_q (z_c;x_t)&=\mathbb E_{q(z_c, x_t)} \left[\log q(z_c|x_t) - \log {q(z_c)}\right],\nonumber  \\
    I_q (z_t;x_t)&=\mathbb E_{q(z_t, x_t)} \left[\log q(z_t|x_t) - \log {q(z_t)}\right].
\end{align}
$I_q (z_t;z_c)$ is defined similarly, and more details can be found in the Appendix. To address the constrained optimization problem stated in Equation \ref{eq:vae_mi}, we demonstrate that it can be transformed into the optimization of a novel valid Evidence Lower Bound (ELBO) for the data log-likelihood. We provide a theoretical guarantee for this transformation, ensuring the reliability of the approach.

\begin{theorem}
\label{thm:th2}
Let the mutual information (MI) between $z_c,$ $z_t$ and $x_t$ be Eq.~\ref{eq:mi_term} in terms of $q,$ and $p(z_t)=p(z_t|z_{<t}, x_t),$ then solving Problem.~\ref{eq:vae_mi} can be converted into an evidence lower bound of $\mathbb E_{x_{1:T}\sim p_D} \log (x_{1:T})$ as follows:

\begin{align*}
\max_{p,q} \; \mathbb E_{q} \left[\log p(x_{1:T}|z)- KL[q(z)||p(z)] \right],
\end{align*}

\end{theorem}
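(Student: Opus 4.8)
The plan is to convert the constrained program in Eq.~\ref{eq:vae_mi} into the displayed objective by absorbing the mutual-information constraints and then collapsing the result with the standard ``aggregated posterior'' identity. Write $q(z)\coloneqq\mathbb E_{x_{1:T}\sim p_D}\bigl[q(z\mid x_{1:T})\bigr]$ for the aggregated posterior. The first step is the decomposition
\[
\mathbb E_{x_{1:T}\sim p_D}\bigl[KL[\,q(z\mid x_{1:T})\,\|\,p(z)\,]\bigr]=I_q(x_{1:T};z)+KL[\,q(z)\,\|\,p(z)\,],
\]
so that, after averaging the bound of Eq.~\ref{eq:vae} over $p_D$, the only discrepancy between its KL term and the $KL[q(z)\|p(z)]$ appearing in the target objective is $-I_q(x_{1:T};z)$. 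It therefore suffices to show that the regularizers dictated by the constraints, namely $+\,I_q(z_c;x_t)+I_q(z_t;x_t)-I_q(z_c;z_t)$ (a $+$ for the $\argmax$ constraint on the data--latent terms and a $-$ for the $\argmin$ constraint on $I_q(z_c;z_t)$), add up to exactly $I_q(x_{1:T};z)$ and hence cancel this discrepancy.

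For the matching step I would use the interaction-information form of the mutual-information chain rule,
\[
I_q(z_c,z_t;x_t)=I_q(z_c;x_t)+I_q(z_t;x_t)-I_q(z_c;z_t)+I_q(z_c;z_t\mid x_t).
\]
The posterior factorization $q(z_{1:T},z_c\mid x_{1:T})=q(z_c\mid x_{1:T})\prod_{t}q(z_t\mid z_{<t},x_t)$ renders $z_c$ conditionally independent of the dynamic latents given the observations, which annihilates the residual $I_q(z_c;z_t\mid x_t)$; the hypothesis $p(z_t)=p(z_t\mid z_{<t},x_t)$ is precisely what lets the per-step dynamic prior absorb the conditioning on $z_{<t}$, so that summing these identities over $t$ telescopes to $I_q(x_{1:T};z)$ with no stray cross-terms. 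Substituting back, the constrained objective becomes
\[
\mathbb E_{x_{1:T}\sim p_D}\mathbb E_{q}\bigl[\log p(x_{1:T}\mid z)\bigr]-I_q(x_{1:T};z)-KL[\,q(z)\,\|\,p(z)\,]+I_q(x_{1:T};z)=\mathbb E_{q}\bigl[\log p(x_{1:T}\mid z)-KL[\,q(z)\,\|\,p(z)\,]\bigr].
\]
That this functional is a genuine evidence lower bound then follows from Jensen's inequality applied with the aggregated posterior $q(z)$ as the variational distribution: $\log p(x_{1:T})\ge\mathbb E_{q(z)}\bigl[\log p(x_{1:T}\mid z)\bigr]-KL[q(z)\|p(z)]$ for every $x_{1:T}$, and averaging over $p_D$ gives the claim; equivalently, the displayed objective is the $p_D$-averaged ELBO shifted by the mutual information that the $\argmax$ constraint drives to its maximal value.

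I expect the principal obstacle to be the temporal bookkeeping. The regularizers in Eq.~\ref{eq:vae_mi} are stated at a single index $t$, whereas the aggregated-posterior identity naturally produces $I_q(x_{1:T};z)$ for the full sequence, so the delicate point is to verify that summing the per-step interaction-information identities, using the Markov-type prior $p(z_t\mid z_{<t},x_t)$, reproduces $I_q(x_{1:T};z)$ exactly, and that replacing $x_t$ by $x_{1:T}$ inside the mutual-information terms is harmless under the stated factorizations. Making the conditional independence $I_q(z_c;z_{1:T}\mid x_{1:T})=0$ rigorous for the recurrent encoder, keeping the reconstruction term read consistently (with $z$ drawn from the appropriate posterior), and tracking signs so the added $+I_q(x_{1:T};z)$ does not inadvertently loosen the bound, are the steps that need genuine care; everything else is routine variational-inference manipulation.
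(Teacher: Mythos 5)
Your core computation is the same one the paper uses, just packaged differently. The paper's proof of Theorem~\ref{thm:th2} rests on two ingredients: the aggregated-posterior decomposition $\mathbb E_{x_t\sim\gD_t}\bigl[KL[q(\cdot|x_t)\|p(\cdot)]\bigr]=I_q(\cdot\,;x_t)+KL[q(\cdot)\|p(\cdot)]$ applied separately to $z_c$ and $z_t$ (Eq.~\ref{eq:kl_f} and Eq.~\ref{eq:kl_z}), and an algebraic expansion of $KL[q(z_c,z_t)\|p(z_c,z_t)]$ obtained by inserting $\log q(z_c)q(z_t)$, which produces exactly the combination $I_q(z_c;x_t)+I_q(z_t;x_t)-I_q(z_c;z_t)$; non-negativity of $KL[q(z_c,z_t)\|p(z_c,z_t|x_t)]$ then certifies the bound. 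Your route via the interaction-information chain rule $I_q(z_c,z_t;x_t)=I_q(z_c;x_t)+I_q(z_t;x_t)-I_q(z_c;z_t)+I_q(z_c;z_t\mid x_t)$, with the residual killed by the factorized posterior $q(z_c,z_t|x_t)=q(z_c|x_t)q(z_t|x_t)$, is an equivalent and arguably cleaner derivation of the same identity, so at the level of a single domain $t$ your argument matches the paper's.

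Where you diverge --- and where your plan has a genuine gap --- is the attempt to assemble the per-step regularizers into one sequence-level quantity $I_q(x_{1:T};z)$. The paper never forms this object: it first factorizes $\log p(x_{1:T})=\sum_t\mathbb E[\log p(x_t)]$ using the Markov prior and the per-domain generative factorization (Eq.~\ref{eq:dy_joint}, Eq.~\ref{eq:single_joint}), and only then applies the KL/MI identity domain by domain, with every aggregated posterior taken over the empirical distribution of that single domain. Your proposed telescoping $\sum_t\bigl(I_q(z_c;x_t)+I_q(z_t;x_t)-I_q(z_c;z_t)\bigr)=I_q(x_{1:T};z)$ would require all cross-time terms such as $I_q(z_t;x_{t'})$ and $I_q(z_t;z_{t'})$ for $t\neq t'$ to vanish, which does not follow from the stated assumptions --- the recurrent posterior $q(z_t|z_{<t},x_t)$ explicitly couples $z_t$ to earlier latents, and the per-domain aggregations in Eq.~\ref{eq:mi_term} are over samples within $\gD_t$, not over joint sequences. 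You correctly flag this temporal bookkeeping as the delicate point; the fix is not to prove the telescoping but to avoid it, performing the cancellation inside each domain and then summing the per-domain ELBOs as the paper does. With that repair your argument coincides with the paper's.
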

In practice, we follow the spirit of \citep{higgins2016beta,DBLP:conf/nips/BaiWG21,DBLP:conf/icml/Qin0L22} to optimizating the following  objective:
\begin{align}
\label{eq:elbo_mi}
\mathcal{L}_e
=\; & \displaystyle\sum_{t=1}^T\mathbb E_{z_c, z_t}\big[\log p(x_t|z_c, z_t) \nonumber \\ 
&- \alpha (KL[q(z_c|x_{1:T})||p(z_c)] \nonumber \\ &+ KL[q(z_t|z_{<t}, x_{t})||p(z_{t}|p(z_{<t})]) \nonumber \\  &+ \beta  (I_q (z_c;x_t) + I_q (z_t;x_t) - I_q (z_t;z_c))\big],
\end{align}

\subsection{Domain Adaptive Classifier}
\label{sec:classifier}
To model $p(y_{1:T}, w_{1:T}| z_{1:T}, z_c)$ for classification, as discussed in Sec.~\ref{sec:motivation}, the classifier is expected to evolve accordingly, e.g. the magnitude of model parameter weights for some features will evolve gradually.
We use $w_t$ to represent the classifier for domain $t$ and consider $w_t$ as a latent variable in the category space, inspired by \citet{DBLP:conf/icml/Qin0L22}. 
According to Eq.~\ref{eq:joint_xy}, the adaptive classifier $w_t$ can be decomposed as:
\begin{align*}
\label{eq:gen_y}
    p(y_{1:T}, w_{1:T}| z_{1:T}, z_c) = \prod_{t=1}^T
    p(w_{t}|w_{<t})p(y_{t}|z_t, z_c, w_{t}),
\end{align*}
where $w_t$ can be inferred by classifiers from previous domains, and predict labels $\hat y_t$ given $z_c$ and $z_t.$ 
To model the dynamic variables $w_t$, we parameterize $p({w}_{t}|{w}_{<t})$ as a learnable categorical distribution.
To optimize the prior distribution $p({w}_{t}|{w}_{<t}),$ the objective function can also be derived based on ELBO:
\begin{equation}
\label{equ:loss_c}    
\begin{split}
    \mathcal{L}_c &= \lambda \displaystyle\sum_{t=1}^T\mathbb{E}_{q(w_t|w_{<t},{y}_t)} \big[\log p(y_{t}|z_c,z_{t},w_t)  \\
    &- \alpha KL[ (q(w_t|w_{<t},y_t),p(w_{t}|w_{<t}))]\big],
\end{split}
\end{equation}
where $q({w}_t|{w}_{<t}, {y}_t)$ denotes the posterior distribution and $p(y_{t}|z_c,z_{t},w_t)$ denotes the classification loss. Given training data $\gS$, our proposed framework can be optimized through the objective function $\mathcal{L}_{\ours}=\mathcal{L}_e + \mathcal{L}_c.$ To guranttee the efficacy of our method proposed, we provide the following proposition:
\begin{proposition}
\label{prop}
Given the probabilistic generative model defined for the joint distribution of all source domains as in Eq.~\ref{eq:joint_xy}, $\mathcal{L}{_\ours}$ is equivalent to the ELBO of the data log-likelihood $\log (x_{1:T}, y_{1:T})$ on source domains.
\end{proposition}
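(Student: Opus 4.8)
The plan is to show that $\mathcal{L}_{\ours} = \mathcal{L}_e + \mathcal{L}_c$ is an ELBO of $\log p(x_{1:T}, y_{1:T})$ by stitching together the two sub-bounds that were already established in the paper. The starting point is the factorization in Eq.~\ref{eq:joint_xy}, which splits the joint density into a ``generative-of-$x$'' term $p(x_{1:T}, z_{1:T}, z_c)$ and a ``predictive-of-$y$'' term $p(y_{1:T}, w_{1:T} \mid z_{1:T}, z_c)$. Correspondingly I would introduce the joint variational posterior $q(z_{1:T}, z_c, w_{1:T} \mid x_{1:T}, y_{1:T}) = q(z_c \mid x_{1:T}) \prod_{t=1}^T q(z_t \mid z_{<t}, x_t) \prod_{t=1}^T q(w_t \mid w_{<t}, y_t)$, which is exactly the product of the posteriors used in Sections~\ref{sec:encoder} and~\ref{sec:classifier}. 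Then the standard Jensen/variational argument gives $\log p(x_{1:T}, y_{1:T}) \ge \mathbb{E}_q[\log p(x_{1:T}, y_{1:T}, z, w) - \log q(z, w \mid x_{1:T}, y_{1:T})]$, and the job is to reorganize the right-hand side into $\mathcal{L}_e + \mathcal{L}_c$.

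The key steps, in order: (i) substitute the Eq.~\ref{eq:joint_xy} factorization and the chain-rule expansions of Eq.~\ref{eq:gen_x} and the analogous expansion of $p(y_{1:T}, w_{1:T} \mid z_{1:T}, z_c)$ into the joint ELBO; (ii) observe that the logarithm of the product splits additively, and that the $q$-expectation factorizes over the independent blocks $(z_c, z_{1:T})$ and $w_{1:T}$ because the posterior is a product and $w_{1:T}$ does not feed back into the $x$-reconstruction; (iii) collect all terms involving only $x_{1:T}$, $z_c$, $z_{1:T}$ — these are precisely the reconstruction term $\sum_t \mathbb{E}_q \log p(x_t \mid z_c, z_t)$ plus the KL terms $-KL[q(z_c \mid x_{1:T}) \| p(z_c)] - \sum_t KL[q(z_t \mid z_{<t}, x_t) \| p(z_t \mid z_{<t})]$, which by Theorem~\ref{thm:th2} is equivalent (up to the MI reparametrization, using $p(z_t) = p(z_t \mid z_{<t}, x_t)$) to $\mathcal{L}_e$; (iv) collect the remaining terms involving $y_{1:T}$, $w_{1:T}$ — the classification log-likelihood $\sum_t \mathbb{E}_q \log p(y_t \mid z_c, z_t, w_t)$ minus $\sum_t KL[q(w_t \mid w_{<t}, y_t) \| p(w_t \mid w_{<t})]$ — and identify this with $\mathcal{L}_c$ (modulo the weighting hyperparameters $\alpha, \lambda$, which are the usual $\beta$-VAE-style reweighting that does not affect the ``is-an-ELBO'' claim). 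Summing (iii) and (iv) recovers $\mathcal{L}_{\ours}$, completing the argument.

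The main obstacle I anticipate is step (iii): making rigorous the claim that $\mathcal{L}_e$ as written in Eq.~\ref{eq:elbo_mi} — which contains the mutual-information terms $I_q(z_c;x_t) + I_q(z_t;x_t) - I_q(z_t;z_c)$ rather than bare KL divergences — really is the $z$-part of the ELBO. This is where Theorem~\ref{thm:th2} must be invoked carefully: one has to unfold the definitions in Eq.~\ref{eq:mi_term}, use the identity relating $KL[q(z)\|p(z)]$ to $KL[q(z \mid x)\|p(z)]$ minus a mutual-information term, and verify that the substitution $p(z_t) = p(z_t \mid z_{<t}, x_t)$ is what converts the constrained problem Eq.~\ref{eq:vae_mi} into the clean bound of Theorem~\ref{thm:th2}. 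The secondary subtlety is bookkeeping the hyperparameters $\alpha$ and $\beta$ honestly: strictly speaking the weighted objective is only a valid ELBO when $\alpha = \beta = 1$, so the proposition should be read as a statement about the unweighted objective, with the reweighting flagged as a practical relaxation in the spirit of $\beta$-VAE. Everything else — the factorizations, the additivity of $\log$, the product structure of $q$ — is routine once the decomposition in Eq.~\ref{eq:joint_xy} is taken as given.
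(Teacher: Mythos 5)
Your proposal follows essentially the same route as the paper's proof: both start from the factorization of the joint $p(x_{1:T},y_{1:T},z_c,z_{1:T},w_{1:T})$, apply the standard variational bound with the product posterior $q(z_c)\prod_t q(z_t\mid z_{<t},x_t)\,q(w_t\mid w_{<t},y_t)$, split the log-ratio additively into the $(x,z)$ block and the $(y,w)$ block, and then invoke the aggregate-posterior identity $KL[q(z)\Vert p(z)]=\mathbb E_x\,KL[q(z\mid x)\Vert p(z)]-I_q(z;x)$ from Theorem~\ref{thm:th2} to turn the $z$-block into the mutual-information form of $\mathcal{L}_e$. Your remark that the bound is strictly an ELBO only for $\alpha=\beta=\lambda=1$ is a fair caveat the paper leaves implicit, but it does not change the argument.
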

Proposition~\ref{prop} demonstrates that \ours can effectively identify the latent representations of interest and model their relationship with the labels within a given dataset.

\section{Model Instantiations and Implementation}
\label{sec:imp}
The implementation of network architecture for \ours is depicted in Fig.~\ref{fig:model}. It is composed of two parts: (1) encoders to extract $z_c$ and $z_{1:T}$ (2) adaptive classifier on top of the two latent representations. 
\paragraph{Encoder.}The encoder module consists of a base-encoder $\theta$ to learn latent representation from the data samples $x_t$, and two LSTMs $\kappa_c$ and $\pi_q$ to extract $z_c$ and $z_t$ based on $q(z_c|x_{1:T})$ and $q(z_t|z_{<t}, x_t)$, respectively. We set the corresponding prior distribution $p(z_c)$ to be the standard Gaussian, and $p(z_t|z_{<t})$ is instantiated by $\pi_p$, which share the same architectures with $\pi_q.$ 
\paragraph{Decoder.} The decoder $D$ takes the concatenation of $z_c$ and $z_t$ as input and output the reconstructed data samples $\hat x_t$. For the classification module, the prior network $\tau_p$ for $p(w_t|w_{<t})$ is a LSTM network with a categorical distribution as the output, which generates linear classifiers at domain $t$ based on the previous sequence $w_{t}.$ The corresponding posterior network $\tau_q$ for $q(z^v_t|z_{<t}^v,y_t)$ share a similar structure while additionally taking the one-hot code of label $\mathbf{y}_t$ as the input. Finally, we can generate the classification results $\bar{y_t}$ by using $w_t$ and the combination of $z_c$ and $z_t.$ The detailed optimization procedure can be found in Algorithm~\ref{alg:ours}.

\noindent{\textbf{Inference.}}~ To predict the label of $x_{T+1}$ sampled from the following target domains in $\gD_{T+1}$, we adopt $\pi_q$ and $\tau_p$ to infer $z_{T+1}$ and $w_{T+1}$ and use $\kappa_c$ to extract the latent representation $z_c$. Then, we can have the prediction results $\overline{y}_{T+1}.$ Reapting this process, we can generalize our model into future target domains, e.g., $\gD_{T+2}, \gD_{T+3}.$
\begin{algorithm}[t]
\caption{Optimization procedure of \ours}
\label{alg:ours}
    \begin{algorithmic}[1]
        \STATE {\bfseries Input:} Source labeled datasets $\gS$ with $T$ domains; Training epochs $E;$ Batch Size $B.$
   \STATE Randomly initialize $\theta, \kappa_c, \pi_p, \pi_q, \tau_p, \tau_q, D$
   \STATE Assign $\mathbf{z}_0, \mathbf{w}_0 \gets \mathbf{0}$

   \FOR {$t = 1,2,\dots,E$}
      \FOR {$i = 1, 2,\dots,T$}
      \STATE Sample a batch $B$ of data $(x_t, y_t)$ from $\gD_t$
      \STATE Compute batch loss $\mathcal{L}_e$ according to Eq.~\ref{eq:elbo_mi} for $\kappa_c, \pi_p, \pi_q$ and $D$
      \STATE Compute batch loss $\mathcal{L}_c$ according to Eq.~\ref{equ:loss_c} for $\tau_p, \tau_q$

      \STATE Update all modules by the total loss $\mathcal{L}_{\ours}$
      \ENDFOR
   \ENDFOR
    \end{algorithmic}
\end{algorithm}

\section{Experiments}
\label{sec:exp}

\subsection{Experimental Setup}
To evaluate the effectiveness of \ours, we conducted experiments on both synthetic and real-world datasets, following the setting of LSSAE \citep{DBLP:conf/icml/Qin0L22}. Specifically, we compared our approach with invariant learning methods and the state-of-the-art EDG methods on three synthetic datasets (Circle, Sine and Rotated MNIST) and three real-world datasets (Portraits, Caltran, and Elec). We also evaluated the results on one additional variant, Sine-C, which was created for EDG settings by \citet{DBLP:conf/icml/Qin0L22}. The domains were split into the source, intermediate, and target domains with a ratio of ${1/2:1/6:1/3}$, with the intermediate domains used as the validation set. 

\paragraph{Datasets.}We briefly introduce these datasets here, and leave more details on the dataset and baselines in the Appendix. (1) The \textbf{Circle} dataset~\cite{Pesaranghader2016FastHD} includes 30 evolving domains, where data points are sampled from 30 2D Gaussian distributions. For \textbf{Circle-C}, concept shift is introduced by gradually changing the center and radius of the decision boundary over time. (2) The \textbf{Sine} dataset~\cite{Pesaranghader2016FastHD} is extended to 24 evolving domains by rearranging it. To test models whether can adapt to sudden change, the labels of \textbf{Sine-C} are reversed (i.e., from 0 to 1 or from 1 to 0) from the 6th domain to the last one. (3) The \textbf{Rotated MNIST} (RMNIST) dataset~\cite{Ghifary2015RMNIST} consists of MNIST digits with varying degrees of rotation. (4) The \textbf{Portraits} dataset~\cite{ginosar2015century} contains photos of high-school seniors from the 1900s to the 2000s for gender classification. We split the dataset into 34 domains based on a fixed interval over time. (5) The \textbf{Caltran} dataset~\cite{Hoffman2014CVPR} is a real-world surveillance dataset comprising images captured from a fixed traffic camera deployed in an intersection. It involves predicting the type of scene based on continuously evolving data. We divide it into 34 domains based on different times. (6) The \textbf{Elec} dataset~\cite{Dau2019UCR} is designed for the time-section prediction of current power supply based on the hourly records of an Italian electricity company. The concept shift may arise from changes in season, weather, or price. We split it into 30 domains based on days.

\paragraph{Baselines.}We compare \ours with the state-of-the-art EDG methods, i.e., LSSAE and DRAIN. Additionally, we choose several representative methods from 3 main categories: classical supervised learning, continual learning, and invariant learning. \textbf{Classical Supervised Learning:} ERM~\citep{erm}. \textbf{Continual Learning:} EWC~\citep{kirkpatrick2017overcoming} and SI~\citep{zenke2017continual}. \textbf{Invariant learning}: (5) IRM~\citep{irmv1}, CORAL~\citep{CORAL}, Mixup~\citep{Yan2020Mixup} and LISA~\citep{lisa}. We  implemented these baselines according to \citep{wildtime}.

\begin{table*}[!t]

% \vskip 0.15in
\begin{center}
\begin{normalsize}
% \begin{sc}
\begin{tabular}{c|cccccccc}
\toprule
\textbf{Algorithm} & \textbf{Circle} & \textbf{Sine}  & \textbf{Sine-C} & \textbf{Elec} & \textbf{RMNIST} &  \textbf{Portraits} & \textbf{Caltran}  & \textbf{Avg}\\
\midrule
ERM  & 49.3  $\pm$  2.1   & 62.7 $\pm$ 1.1 & 62.3 $\pm$ 1.2 & 70.8 $\pm$ 0.6 & 41.6 $\pm$ 0.7 & 87.9 $\pm$ 1.4 & 61.2 $\pm$ 2.3 &  62.3    \\
IRM   & 53.6 $\pm$ 2.7  & 62.3 $\pm$ 0.9 & 59.4 $\pm$ 0.7 &70.5 $\pm$ 0.1   & 40.2 $\pm$ 0.4 & 87.2 $\pm$ 1.7 & 63.8 $\pm$ 0.8 &  62.4  \\
Mixup  & 48.7 $\pm$ 1.5  & 62.2 $\pm$ 0.7 & 62.0 $\pm$ 1.5 &70.2 $\pm$ 0.9  & 42.1 $\pm$ 0.4 & 87.8 $\pm$ 1.0 & 66.1 $\pm$ 1.0  & 62.7   \\
LISA  & 48.4 $\pm$ 1.1   & 61.7 $\pm$ 0.5  & 61.3 $\pm$ 1.2  & 70.1 $\pm$ 1.0 & 41.9 $\pm$ 0.8 & 88.0 $\pm$ 0.9 & 66.3 $\pm$ 1.1  & 62.5  \\
CORAL  &51.2 $\pm$ 4.2  & 58.7 $\pm$ 1.9  & 60.2 $\pm$ 2.1  &70.1 $\pm$ 0.7 & 42.3 $\pm$ 0.8 & 86.1 $\pm$ 1.9 & 65.1 $\pm$ 1.5 &  62.0  \\
GroupDRO  & 54.3 $\pm$ 3.4  & 59.3 $\pm$ 0.2  & 59.2 $\pm$ 4.1  &68.6 $\pm$ 0.6  & 42.8 $\pm$ 1.1 & 84.3 $\pm$ 1.8 & 63.6 $\pm$ 1.3 & 61.7\\
\midrule
EWC  & 59.1 $\pm$ 3.9  & 65.4 $\pm$ 4.8  & 64.2 $\pm$ 4.3 & 69.5 $\pm$ 1.1  & 32.2 $\pm$ 5.7 & 88.7 $\pm$ 1.2 & 60.1 $\pm$ 3.2 & 62.7 \\
SI   & 58.3 $\pm$ 2.7 & 69.7 $\pm$ 3.9 & 63.6 $\pm$ 4.7  & 69.8 $\pm$ 1.3 &   31.4 $\pm$ 5.3 & 88.7 $\pm$ 0.8 & 59.4 $\pm$ 3.7  & 63.0  \\
\midrule
DRAIN  & 54.5 $\pm$ 4.0 & 69.5 $\pm$ 2.7 & \textbf{71.0} $\pm$ \textbf{1.3}  & 71.1 $\pm$ 0.7 & 44.2 $\pm$ 1.1 & - & -    & - \\
LSSAE  & \textbf{63.5} $\pm$ \textbf{4.7} & 68.4 $\pm$ 3.5 & 63.6 $\pm$ 3.7   & 71.0 $\pm$ 0.5 & 45.3 $\pm$ 1.4 & 88.9 $\pm$ 1.5 & 68.8 $\pm$ 3.4    & 67.0  \\
\textbf{MISTS}  & 62.7 $\pm$ 2.7 & \textbf{78.2} $\pm$ \textbf{3.3} & 70.1 $\pm$ 3.2   & \textbf{71.4} $\pm$ \textbf{0.8} & \textbf{47.5} $\pm$ \textbf{1.3} & \textbf{89.2} $\pm$ \textbf{1.3} & \textbf{70.2} $\pm$ \textbf{2.1} &  \textbf{69.9} \\

\bottomrule
\end{tabular}
% \end{sc}
\end{normalsize}
\end{center}
\caption{Experimental Results (Accuracy \%) on Synthetic and Real-World Datasets across Different Methods.}
\label{tab:results}
% \vskip -0.1in
\end{table*}

\paragraph{Quantitative results.}The results of our proposed \ours and the baseline methods are presented in Table~\ref{tab:results}. The experimental results reveal the significant improvement of EDG (Evolving Domain Generalization) methods over traditional DG (Domain Generalization) methods. This finding aligns with our theoretical results as well as the empirical observations in the existing EDG literature. Additionally, the substantial performance gap observed between continual learning methods (such as SI and EWC) and EDG methods highlights the importance of effectively leveraging historical knowledge to learn evolutionary patterns, which is not taken into account by methods like SI and EWC.

Notably, \ours outperforms other EDG methods in terms of average accuracy across the seven datasets. These results highlight the importance of a clear separation of dynamic and invariant features and incorporating both features for successful EDG. By considering the evolving nature of the data, \ours demonstrates improved performance and shows promise for enabling better adaptation to changing environments from a feature learning perspective.

\begin{figure}[t]
  \begin{center}
   \includegraphics[width=0.46\textwidth]{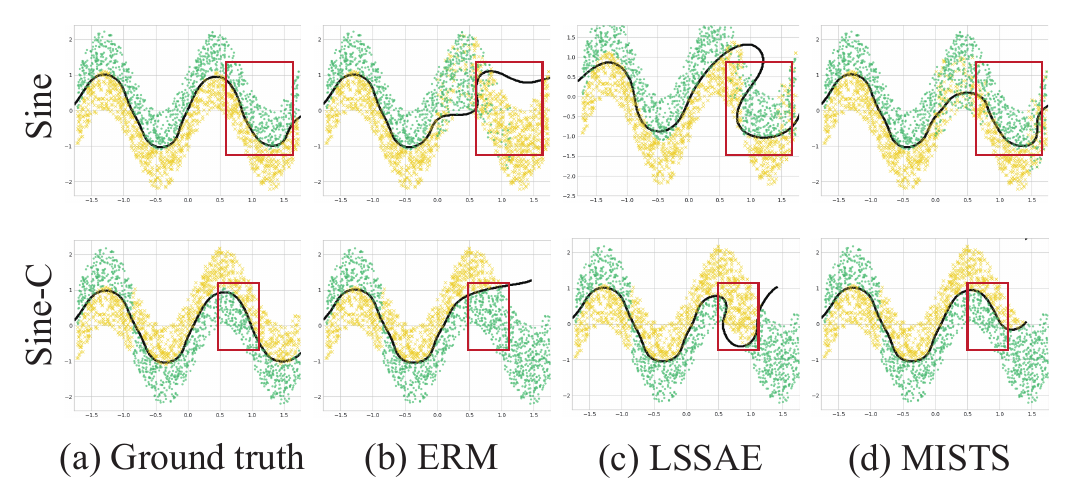}
  \end{center}
  \caption{The visualization presents the decision boundaries for the Sine and Sine-C datasets. In the Sine dataset's ground truth, positive and negative labels are denoted by green and yellow dots, respectively. Figures (b-d) illustrate the prediction results on the Sine dataset, obtained through the ERM, LSSAE, and MISTS methods, respectively.}
  \label{fig:vis}
\end{figure}

\paragraph{Qualitative results.}To assess the effectiveness of our method, we visualize the decision boundaries of our proposed approach along with two baselines, ERM and LSSAE, on the Sine and Sine-C datasets. The visualization results are presented in Figure~\ref{fig:vis}. For better visualization, we apply certain smoothing and augmentation techniques to the challenging Sine-C curve, which does not affect our results.

As depicted in the figure, all methods demonstrate a good fit to the source domains (the left half of the figures). However, unlike ERM, which only adapts to the source domains, both LSSAE and \ours exhibit desirable generalization capabilities to unseen target domains. This observation validates the effectiveness of our EDG approach in capturing underlying patterns that evolve across domains, resulting in improved performance. It is worth noting that LSSAE struggles to recover part of the desired decision boundary at the unobserved timestamps compared to \ours. This limitation may stem from LSSAE's sole reliance on an adaptive classifier built on top of invariant features, potentially leading to a loss of dynamic information. Additionally, all methods show potential for improvement when faced with abrupt changes.

Overall, the visualization results highlight the superiority of \ours in terms of generalization to unseen target domains, confirming its ability to capture and leverage evolving patterns for enhanced performance.

\subsection{Ablation Study}
In this subsection, we conduct ablation studies to assess the individual contributions of different components in \ours, using the RMNIST dataset. We systematically remove one or several components during training to create new variants of \ours and evaluate their performance. The results are as summarized in Table~\ref{tab:ablation}.

To begin, we investigate the significance of leveraging additional information for improved generalization. The results from variants A and B indicate a significant degradation in performance compared to the original \ours. This observation highlights the indispensability of both $z_c$ and $z_t$ components for achieving promising results.

Furthermore, we explore the impact of clean separation on the efficiency of utilizing $z_c$ and $z_t$. Variant C exhibits a significant performance drop compared to Variant A due to the absence of Mutual-Information terms, suggesting that a clean separation is crucial for optimal utilization of these components. Conversely, Variant D performs slightly better than the original \ours, indicating that it can leverage some invariant information during the extraction of dynamic features, without a clean separation. Finally, Variant E demonstrates that the absence of an adaptive classifier leads to performance similar to that of the ERM (Empirical Risk Minimization) baseline. This finding suggests that the adaptive classifier plays a vital role in enhancing the model's performance, distinguishing it from the conventional ERM approach.

Overall, these ablation studies provide valuable insights into the importance and interplay of different components within \ours, shedding light on the factors that contribute to its superior performance compared to the variants and baselines examined.

\begin{table}[t]

    \begin{center}
\begin{normalsize}
            \begin{tabular}{c|c|c|c|c|c}
                \hline
                             & $\ z_c\ $ & $\ z_t\ $ & $\ w_t\ $ & MI & Accuracy              
                \\\hline
                Variant A     & \cmark   & \xmark  &  \cmark  &  \cmark  &  45.7$\ \pm\ $1.1  \\\hline
                Variant B    & \xmark   & \cmark  & \cmark & \cmark  & 10.2$\ \pm\ $0.2  \\\hline
                Variant C & \cmark  & \xmark  & \cmark &  \xmark  &  44.3$\ \pm\ $1.4 \\
                \hline
                Variant D & \xmark  & \cmark  & \cmark &  \xmark  &  11.4$\ \pm\ $0.4   \\
                \hline
                Variant E & \cmark  & \cmark  & \xmark &  \cmark  &  42.1$\ \pm\ $1.3   \\
                \hline
                \textbf{MISTS} & \cmark  & \cmark  & \cmark &  \cmark  &  47.5$\ \pm\ $1.3   \\
                \hline
            \end{tabular}
        
\end{normalsize}
    \end{center}
    \caption{Ablation study of \ours on dataset RMNIST.}
        \label{tab:ablation}
\end{table}

\section{Conlusion}

This paper introduces a novel approach, Mutual Information Based Sequential Autoencoders (\ours), for addressing the challenges of Evolving Domain Generalization (EDG). Our empirical findings demonstrate the significant impact of \ours on enhancing the performance of downstream classification tasks within the EDG framework. The theoretical analysis further supports the effectiveness of our method in modeling the joint distribution of data and labels under evolving distribution shifts.

\paragraph{Future Work.} While our work makes significant contributions to the understanding and advancement of Evolving Domain Generalization (EDG), it is crucial to acknowledge its limitations. One important aspect to consider is the varying importance of invariant and dynamic features across different datasets as their distributions change. Therefore, it becomes essential to find a method that can automatically adapt and adjust the relative importance of these features.

Overall, As the first to focus on simultaneously learning dynamic and invariant features, we aspire to inspire further research in exploring the essential factors that impact the performance of EDG models and effectively leveraging these factors.

\section*{Acknowledgements}
We thank reviewers and meta-reviewers for their valuable comments. This work was supported by CUHK direct grant
4055146. BH was supported by the NSFC Young Scientists Fund No. 62006202, NSFC General Program No. 62376235, Guangdong Basic and Applied Basic Research Foundation No. 2022A1515011652, and HKBU Faculty Niche Research Areas No. RC-FNRA-IG/22-23/SCI/04.

\bibliography{aaai24}

\onecolumn

\section{Derivations of Section~\ref{sec:motivation}}
\label{sec:appendixA}
In this section, we give the formal proof of Theorem~\ref{thm:th1} and the derivation to show that the  causal directions of Eq.~\ref{eq:problem} can be
viewed either way. For completion, we first introduce the Invariant Risk Minimization (IRM), which is one of the representative invariant learning methods.

Specifically, the IRM framework approaches OOD generalization by finding an invariant representation $\varphi$,
such that there exists a classifier acting on $\varphi$ that is
simultaneously optimal in $\gS$.
Hence, IRM leads to a challenging bi-level optimization problem as
\vskip 0.1in
\begin{equation}
	\label{eq:irm}
	\min_{w,\varphi}  \ \sum_{t\in\gS}\gL_t(w\circ\varphi),                            
			\text{s.t.}
	\ w\in\argmin_{\bar{w}:\gZ\rightarrow\gY} \gL_t(\bar{w}\circ\varphi),\ \forall t\in\gS.
\end{equation}
% \vskip 0.1in
Given the training environments $\gS$, and functional spaces $\gW$ for $w$ and $\varPhi$ for $\varphi$,
predictors $f=w\circ\varphi$ satisfying the constraint in Eq.~\ref{eq:irm} are called invariant predictors,
denoted as $\gI(\gS)$. 
When solving for invariant predictors,
characterizing $\gI(\gS)$ is particularly difficult in practice,
% given the access only to finite samples from a small subset of environments.
hence it is natural to restrict $\gW$ to be the space of linear functions on $\gZ=\R^d$~\citep{ntk}.
Furthermore, \citet{irmv1} argue that linear classifiers actually do not provide additional representation power than \emph{scalar} classifiers, i.e., $d=1,\gW=\gS=\mathbb{R}^1$. The scalar restriction elicits a practical variant as follows:
% \vskip 0.15in
\begin{equation}
	\label{eq:irms}
		\min_{\varphi} \ \sum_{t\in\gS}\gL_t(\varphi),
		\text{s.t.}
		\ \nabla_{w|w=1}\gL_t(w\cdot\varphi)=0,\ \forall t\in\gS.
\end{equation}
% \vskip 0.1in
Since Eq.~\ref{eq:irms} remains a constrained programming. \citet{irmv1} further introduce a soften-constrained variant, as the following
\begin{equation}
	\label{eq:irml}
	\min_{\varphi}  \sum_{t\in\gS}\gL_t(\varphi)+\lambda|\nabla_{w|w=1}\gL_t(w\cdot\varphi)|^2.
\end{equation}
If the inner optimization problem is convex, achieving feasibility is equal to the penalty term having a value of 0. Consequently, Equation~\ref{eq:irm} and Equation~\ref{eq:irml} are equivalent when we assign $\lambda=\infty.$ As proved in \citep{irmv1,rosenfeld2020risks}, the optimal solution of IRM is to extract the invariant feature $z_c$ and learn an optimal classifier on top of $z_c.$ Then, we can formulate the insight into the following Definition.
\begin{definition}
    Under the model described \ref{sec:prolem_def} and logistic loss, the optimal invariant model of IRM objective is the predictor defined by the composition of a) the featurizer which recovers the invariant features and b) the classifier which is optimal with respect to those features:
    \begin{align*}
        \phi^*(x) = [z_c, 0], \quad
        w^* = [2\mu_c/\sigma_c^2, 0]
    \end{align*}
\end{definition}

It is obvious that $\phi^*(x) = [z_c]$, and we will give the derivation how to get $w^* = [2\mu_c/\sigma_c^2].$ Notably, this derivation will also certify that the causal directions of Eq.~\ref{eq:problem} can be viewed either way.

The  probability density function  (PDF) of a  multivariate normal distribution  with mean $\mu$ and  covariance matrix  $\Sigma$ is given by:

$$
f(x; \mu, \Sigma) = \frac{1}{\sqrt{(2\pi)^k |\Sigma|}} \exp\left(-\frac{1}{2}(x-\mu)^T \Sigma^{-1} (x-\mu)\right)
$$

where $x$ is the  random variable, $k$ is the number of dimensions, and $|\Sigma|$ is the determinant of the covariance matrix.

In our case, $z_c$ is a multivariate normal distribution with mean $y\cdot \mu_c$ and covariance matrix $\sigma_c^2 I$, where $I$ is the identity matrix. Therefore, the  PDF  of $z_c$ is: 

$$
f(z_c; y\cdot \mu_c, \sigma_c^2 I) = \frac{1}{\sqrt{(2\pi)^k |\sigma_c^2 I|}} \exp\left(-\frac{1}{2}(z_c-y\cdot \mu_c)^T (\sigma_c^2 I)^{-1} (z_c-y\cdot \mu_c)\right)
$$

Since $|\sigma_c^2 I| = (\sigma_c^2)^k$, we can simplify the expression as:

\begin{align}
\label{eq:gussian}
    f(z_c; y\cdot \mu_c, \sigma_c^2 I) = \frac{1}{(\sqrt{2\pi}\sigma_c)^k} \exp\left(-\frac{1}{2\sigma_c^2}(z_c-y\cdot \mu_c)^T (z_c-y\cdot \mu_c)\right)
\end{align}
Now, we can use Bayes' rule to compute the  posterior probability  of $y$ given $z_c$:

$$
P(y=1|z_c) = \frac{P(z_c|y=1)P(y=1)}{P(z_c|y=1)P(y=1) + P(z_c|y=-1)P(y=-1)}
$$

$$
P(y=-1|z_c) = \frac{P(z_c|y=-1)P(y=-1)}{P(z_c|y=1)P(y=1) + P(z_c|y=-1)P(y=-1)}
$$

Using the PDF of $z_c$ that we derived earlier, we can compute $P(z_c|y=1)$ and $P(z_c|y=-1)$ as follows:

$$
P(z_c|y=1) = f(z_c; \mu_c, \sigma_c^2 I)
$$

$$
P(z_c|y=-1) = f(z_c; -\mu_c, \sigma_c^2 I)
$$

Substituting these expressions into the Bayes' rule equations and taking the logarithm, we get:

$$
\log\frac{P(y=1|z_c)}{P(y=-1|z_c)} = \log\frac{P(z_c|y=1)}{P(z_c|y=-1)} + \log\frac{P(y=1)}{P(y=-1)}
$$

Since we assume that the label $y \in \{1, -1\}$ is randomly sampled from the uniform distribution at the label space, we have:
$$
\log\frac{P(y=1)}{P(y=-1)} = 0,
$$
Then, substituting the expression $f(z_c; \mu_c, \sigma_c^2 I)$ for $P(z_c|y=1)$ and $f(z_c; -\mu_c, \sigma_c^2 I)$ for $P(z_c|y=-1)$, we get:

$$
P(y=1|z_c) = \frac{f(z_c; \mu_c, \sigma_c^2 I)}{f(z_c; \mu_c, \sigma_c^2 I)+f(z_c; -\mu_c, \sigma_c^2 I)}
$$
With Eq.~\ref{eq:gussian}, we have:
$$
P(y=1|z_c) = \frac{1}{1+exp(-\frac{2\mu_c z_c}{\sigma^2})}
$$
Therefore, the log-odds of $y$ is given by:
$$
\log\left(\frac{P(y=1|z_c)}{1-P(y=1|z_c)}\right)=\log\left(\frac{P(y=1|z_c)}{P(y=-1|z_c)}\right) = \frac{2\mu_c z_c}{\sigma^2}.
$$
With the above equations, we successfully show the optimal classifier under the IRM objective. In a specific domain $t,$ we can obtain a similar relationship between label $y$ and $z$ Then, we are able to give the formal version of Theorem~\ref{thm:th1} and give the corresponding proof. 

Considering $z=[z_c,z_t] \in \mathbb{R}^k$ at domain $t$, it is crucial to emphasize the significance of learning all the features from $z_c$ and $z_t$ in the following context. Our objective is to solve the standard logistic regression problem as shown in Equation~\ref{eq:logistic}.

\begin{align}
\label{eq:logistic}
        y &= \begin{cases}
        +1 &\text{w.p. } \sigma(\beta^Tz), \\
        -1 &\text{w.p. } \sigma(-\beta^Tz).
        \end{cases}
\end{align}
    
Then, we have the following theorem:
\begin{theorem}
Consider $z=[z_c,z_t] \in \mathbb{R}^k$ at domain $t$. Let $\forall S\subseteq[k],\ P(\beta_S^Tz_S \neq 0) > 0$, and assume that no feature can be written as a linear combination of the other features. Then, for any distribution $p(z)$, any classifier $f(z) = \sigma(\beta_S^T z_S)$ that uses a strict subset of the features $S\subsetneq [k]$ has strictly higher risk with logistic loss than the Bayes classifier $f^*(z) = \sigma(\beta^Tz)$. This result also holds for 0-1 loss if $\beta_{-S}^Tz_{-S}$ has greater magnitude and opposite sign of $\beta_S^Tz_S$ with non-zero probability.
\end{theorem}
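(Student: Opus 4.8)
The plan is to show that restricting to a strict subset of features strictly increases the logistic risk by appealing to the strict convexity of the logistic loss and the fact that the Bayes-optimal classifier is the conditional expectation (in the appropriate sense). Concretely, write $\ell(u, y) = \log(1 + e^{-yu})$ for the logistic loss as a function of the margin $u = \beta^T z$. The population risk of a linear scorer $z \mapsto v^T z$ is $R(v) = \mathbb{E}_{z,y}[\ell(v^T z, y)]$. Because $y$ is generated by $P(y = 1 \mid z) = \sigma(\beta^T z)$, a standard calculation shows that $R$ is minimized precisely at $v = \beta$: indeed $R(v) = \mathbb{E}_z[ H(\sigma(\beta^T z)) + \mathrm{KL}(\sigma(\beta^T z) \,\|\, \sigma(v^T z))]$ up to the entropy term, so the excess risk $R(v) - R(\beta)$ equals $\mathbb{E}_z[\mathrm{KL}(\sigma(\beta^T z)\,\|\,\sigma(v^T z))] \ge 0$, with equality iff $\sigma(v^T z) = \sigma(\beta^T z)$ almost surely, i.e. iff $(v - \beta)^T z = 0$ almost surely.

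The second step is to rule out that equality. For a classifier using only features in $S \subsetneq [k]$, the scorer is $v^T z$ with $v$ supported on $S$, so $v - \beta$ has a nonzero component on the coordinates $-S = [k]\setminus S$ (since $\beta_{-S} \neq 0$, which follows from the hypothesis $P(\beta_{-S}^T z_{-S} \neq 0) > 0$). The no-collinearity assumption — no feature is a linear combination of the others — guarantees that a vector with support meeting $-S$ cannot annihilate $z$ almost surely; more precisely, $(v-\beta)^T z = 0$ a.s. would express the $-S$ features as a linear combination of the $S$ features (and possibly others), contradicting the assumption, or directly contradicting $P((v-\beta)^T z \neq 0) > 0$ which follows from the subset hypothesis applied to $S' = \mathrm{supp}(v - \beta)$. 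Hence the excess risk is strictly positive, giving the claim for logistic loss.

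For the $0$-$1$ loss, the argument is more delicate because $0$-$1$ risk is not strictly convex and many classifiers can be Bayes-optimal. Here I would argue pointwise: the Bayes classifier predicts $\mathrm{sign}(\beta^T z)$, and it is strictly optimal for $0$-$1$ loss on the event where the restricted classifier's sign disagrees with it, i.e. where $\mathrm{sign}(\beta_S^T z_S) \neq \mathrm{sign}(\beta^T z)$. The stated hypothesis — that $\beta_{-S}^T z_{-S}$ has greater magnitude and opposite sign to $\beta_S^T z_S$ with nonzero probability — is exactly the condition that forces $\mathrm{sign}(\beta^T z) = \mathrm{sign}(\beta_{-S}^T z_{-S}) \neq \mathrm{sign}(\beta_S^T z_S)$ on an event of positive measure, so the restricted classifier misclassifies (relative to Bayes) with positive probability, incurring strictly higher $0$-$1$ risk.

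The main obstacle I anticipate is the second step of the logistic case: carefully translating the hypotheses (the positivity condition $P(\beta_S^T z_S \neq 0) > 0$ for all $S$, together with no-collinearity) into the statement that $(v - \beta)^T z \neq 0$ with positive probability for the specific $v$ at hand, handling the possibility that $v$ has support strictly inside $S$ and that cancellations occur across the $S$ and $-S$ blocks. One must be slightly careful that the relevant vector $v - \beta$ is genuinely nonzero and that its support triggers the positivity hypothesis; invoking the no-collinearity assumption cleanly is what makes this rigorous. The rest — strict convexity of $t \mapsto \log(1 + e^{-t})$ and the KL identity for the logistic excess risk — is routine.
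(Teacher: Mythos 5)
Your proof is correct and follows essentially the same route as the paper's: both reduce the claim to the pointwise (in $z$) optimality of the Bayes classifier together with the observation that the restricted classifier disagrees with it on a positive-probability event, namely where $\beta_{-S}^T z_{-S} \neq 0$, and both handle the $0$-$1$ case by the same sign-flip argument. Your KL-divergence identity for the excess logistic risk is just a more explicit justification of the strict pointwise optimality that the paper's proof asserts without derivation, so it strengthens rather than changes the argument.
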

\begin{proof}
    The Bayes classifier achieves the minimal expected loss for each observation $z$. Consequently, any other classifier exhibits a positive excess risk only if it disagrees with the Bayes classifier on a set of non-zero measure. Let's consider the set of values $z_{-S}$ such that $\beta_{-S}^Tz_{-S} \neq 0$. On this set, we observe the following inequality:

\begin{align*}
f^(\beta^T z) = \sigma(\beta_S^T z_S + \beta_{-S}^T z_{-S}) \neq \sigma(\beta_S^T z_S) = f(z).
\end{align*}

Since these values occur with positive probability, $f$ has a strictly higher logistic risk than $f^*$. Similarly, using the same reasoning, we can find a set of positive measure in which:

\begin{align*}
f^(\beta^T z) = \text{sign}(\beta_S^T z_S + \beta_{-S}^T z_{-S}) \neq \text{sign}(\beta_S^T z_S) = f(z).
\end{align*}

Thus, $f$ also has a strictly higher 0-1 risk.
\end{proof}

By proving the theorem above, we have established the importance of incorporating both $z_c$ and $z_t$ in order to minimize risk. Similar arguments are also provided in \citep{bui2021exploiting,rosenfeld2020risks} for conventional Domain Generalization tasks. The theorem demonstrates that the Bayes classifier, which utilizes all the features in $z$, achieves the minimal expected loss for each observation. This implies that any classifier that uses only a subset of the features, whether in logistic regression or 0-1 loss, will have a higher risk compared to the Bayes classifier.

Therefore, the theorem highlights the necessity of considering all the available features from $z_c$ and $z_t$ to ensure lower risk. Neglecting any of these features or relying on a strict subset can lead to increased risk and potentially inaccurate predictions. By leveraging the full set of features, we can capture the inherent complexity and patterns present in the data, resulting in more robust and accurate classifiers with minimized risk.

\section{Proofs of Section~\ref{sec:method}}

\label{app:proofs}

\subsection{Proofs of Theorem~\ref{thm:th2}}
\label{app:proof_stationay}
We assume that the prior distribution of latent variables $z_t$ satisfies the Markov property, indicating that each variable depends on the value of its preceding states:
\begin{equation}
p(z_t) = p(z_t|z_{<t}).
\end{equation}

\noindent The joint distribution of data and latent variables is:
\begin{equation}
\label{eq:dy_joint}
    \begin{aligned}
        & p(x_{1:T},z_c,z_{1:T}) \\
        &=  \prod_{t=1}^T p(x_t|z_c,z_t) p(z_c) p(z_t|z_{<t}) \\
        &= \prod_{t=1}^T p(z_t|z_{<t})  \prod_{i=1}^{N_t} p(z_c) p(x_{i}|z_c, z_t) \\
    \end{aligned}
\end{equation}
where $p(z_1)=p(z_1|z_0).$ Consequently, our focus lies in determining the expected Evidence Lower Bound (ELBO) for a specific domain $t$. By accomplishing this, we can readily obtain the final ELBO for $p(x_{1:T})$. Considering the introduction of two latent variables to address the two types of distribution shift, we can represent the data-generating process for a particular domain as follows:
\begin{equation}
\label{eq:single_joint}
\begin{aligned}
    p(x_t, z_c, z_t) 
    &= p(z_c) p(z_t) p(x_t|z_c, z_t) \\
    &= p(z_t) \prod_{i=1}^{N} p(z_c) p(x_{i}|z_c, z_t)
\end{aligned}
\end{equation}

\noindent Let $(z_c, z_t)$ denote the latent variables for $x_t$. Consequently, the distribution of these three latent variables can be inferred from the observable data points as $p(z_c|x_t)$ and $p(z_t|x_t)$, respectively. The joint distribution of the latent variables is given by:
\begin{equation}
\label{eq:latent_conditonal_dis}
    p(z_c, z_t|x_t) = p(z_c|x_t) p(z_t|x_t)
\end{equation}
Let $\gD_t$ be the empirical data distribution at domain $t$, assigning probability mass $1/N$ for each of the $N$ training data points in $\gD_t$. Define the aggregated posteriors as follows:
\begin{align*} \label{eq:aggregated-s}
q(z_c) & = \mathbb E_{x_t\sim \gD_t} [q(z_c|x_t)] = \frac{1}{N} \sum_{x_i\in \gD_t} q(z_c|x_i),
\\
~\\
q(z_t) & = \mathbb E_{x_t\sim \gD_t} [q(z_t|x_t)] = \frac{1}{N} \sum_{x_i\in \gD_t} q(z_t|x_i),
\\
~\\
q(s,z_t) & = \mathbb E_{x_t\sim \gD_t} [q(z_c|x_t) q(z_t|x_t)] = \frac{1}{N} \sum_{x_i\in \gD_t} q(z_c|x_i) q(z_t|x_i).
\end{align*}

\noindent With these definitions, we have
\begin{equation}
\begin{aligned}
 &\mathbb E_{x_t\sim \gD_t}[KL[q(z_c|x_t)||p(z_c)]] \\
=&\mathbb E_{x_t\sim \gD_t}\mathbb E_{q(z_c|x_t)}[\log q(z_c|x_t) - \log q(z_c) + \log q(z_c) - \log p(z_c)] \\
=&\mathbb E_{q(z_c, x_t)} \log \left[ \frac{q(z_c|x_t)}{q(z_c)} \right]  + \mathbb E_{q(z_c, x_t)} [\log q(z_c)-\log p(z_c)] \\
=& I_q (z_c;x_t) + KL [q(z_c)||p(z_c)].
\end{aligned}
\end{equation}
In other words, 
\begin{equation}
\begin{aligned}
KL [q(z_c)||p(z_c)]=\mathbb E_{x_t\sim \gD_t}[KL[q(z_c|x_t)||p(z_c)]]-I_q(z_c;x_t).
\end{aligned}
\label{eq:kl_f}
\end{equation}

\noindent Similarly, we have 
\begin{align}
\label{eq:kl_z}
    KL[q(z_t)||p(z_t)]=E_{x_t\sim \gD_t}[KL[q(z_t|x_t)||p(z_t)]] - I_q(x_t;z_t).
\end{align}

\noindent We are now ready to prove the theorem.
We derive a dataset ELBO by subtracting a different KL-divergence from the data log-likelihood:
\begin{equation}
\begin{aligned} 
&\frac{1}{N} \sum_{x_i\in D_t} \log p(x_t) =\mathbb E_{x_t \sim D_t}[\log p(x_{t})]\\
\ge& \mathbb E_{x_t \sim D_t}[\log p(x_t)-KL[q(z_c, z_t)||p(z_c, z_t|x_t)]] \\
=& \mathbb E_{x_t \sim D_t}[\mathbb E_{q(z_c, z_t|x_t)}[\log p(x_t)-(\log q(z_c, z_t)-\log p(z_c, z_t|x_t))]] \\
=& \mathbb E_{x_t \sim D_t}[\mathbb E_{q(z_c, z_t|x_t)}[\log p(x_t)-\log q(z_c, z_t)+\log p(z_c, z_t|x_t)]] \\
=& \mathbb E_{x_t \sim D_t}[\mathbb E_{q(z_c, z_t|x_t)}[\log p(x_t)-\log q(z_c, z_t)\\
&\hspace{9em} +\log p(x_t|z_c, z_t)+\log p(z_c, z_t)-\log p(x_t)]]\\
=& \mathbb E_{x_t \sim D_t}[\mathbb E_{q(z_c, z_t|x_t)}[\log p(x_t|z_c, z_t)-\log q(z_c, z_t)+\log p(z_c, z_t)]]\\
=& \mathbf{\mathbb E_{x_t \sim D_t}[\mathbb E_{q(z_c, z_t|x_t)}[\log p(x_t|z_c, z_t)]] - KL[q(z_c,z_t)||p(z_c,z_t)]}\\
=& \mathbb E_{x_t \sim D_t}[\mathbb E_{q(z_c, z_t|x_t)}[\log p(x_t|z_c, z_t)]]\\
&\hspace{5em} - \mathbb E_{x_t \sim D_t}[\mathbb E_{q(z_c, z_t|x_t)}[\log q(z_c, z_t)-\log p(z_c, z_t)]]\\
=& \mathbb E_{x_t \sim D_t}[\mathbb E_{q(z_c, z_t|x_t)}[\log p(x_t|z_c, z_t)]]\\
& -\mathbb E_{x_t \sim D_t}[\mathbb E_{q(z_c, z_t|x_t)}[\log q(z_c, z_t)-\log q(z_c)q(z_t)+\log q(z_c)q(z_t)-\log p(z_c, z_t)]]\\
=& \mathbb E_{x_t \sim D_t}[\mathbb E_{q(z_c, z_t|x_t)}[\log p(x_t|z_c, z_t)]]\\
&\hspace{5em} -\mathbb E_{x_t \sim D_t} \left[\mathbb E_{q(z_c, z_t|x_t)}\left[\log \frac{q(z_c, z_t)}{q(z_c)q(z_t)}+\log \frac{q(z_c)q(z_t)}{p(z_c, z_t)}\right]\right]\\
=& \mathbb E_{x}[\mathbb E_{q(z_c, z_t|x_t)}[\log p(x_t|z_c, z_t)]]\\
&\hspace{5em} -I_q (z_c;z_t)-\mathbb E_{x_t \sim D_t} \left[\mathbb E_{q(z_c, z_t|x_t)} \left[\log \frac{q(z_c)q(z_t)}{p(z_c)p(z_t)}\right]\right]\\
=& \mathbb E_{x_t \sim D_t}[\mathbb E_{q(z_c, z_t|x_t)}[\log p(x_t|z_c, z_t)]]-I_q (z_c;z_t)\\
&-\mathbb E_{x_t \sim D_t} \left[\mathbb E_{q(z_c, z_t|x_t)}\left[\log \frac{q(z_c)}{p(z_c)}\right]\right]-\mathbb E_{x_t \sim D_t}\left[\mathbb E_{q(z_c, z_t|x_t)}\left[\log \frac{q(z_t)}{p(z_t)}\right]\right]\\
=& \mathbb E_{x_t \sim D_t}[\mathbb E_{q(z_c, z_t|x_t)}[\log p(x_t|z_c, z_t)]]-I_q(z_c;z_t)-KL[q(z_c)||p(z_c)] -KL[q(z_t)||p(z_t)]\\
=& \mathbb E_{x_t \sim D_t}[\mathbb E_{q(z_c, z_t|x_t)}[\log p(x_t|z_c, z_t)]]-I_q(z_c;z_t)\\
&\hspace{5em} -(\mathbb E_{x_t \sim D_t}[KL[q(z_c|x_t)||p(z_c)]]-I_q(z_c;x_t))\\
&\hspace{5em} -(\mathbb E_{x_t \sim D_t}[KL[q(z_t|x_t)||p(z_t)]]-I_q(z_t;x_t))\\
=&
\mathbb E_{x_t \sim D_t} [ \mathbb E_{q(z_t, s|x_t)}[\log p(x_t|z_c, z_t)] \\
&\hspace{5em} - \mathbb E_{x_t \sim D_t} [KL[q(s|x_t)||p(z_c)]] 
- \mathbb E_{x_t \sim D_t} [KL[q(z_t|x_t)||p(z_t)]]\\
&\hspace{5em} +I_q(z_c;x_t)+I_q(z_t;x_t)-I_q(z_c;z_t).
\label{eq:single_elbo}
\end{aligned}
\end{equation}
\noindent The first inequation is due to $KL[q(z_c, z_t)||p(z_c, z_t|x_t)] \geq 0,$ and we have plugged in Eq.~\ref{eq:kl_f} and Eq.~\ref{eq:kl_z} in the third to last step. The last equation of~\ref{eq:single_elbo} is the ELBO objective at domain $t.$ Notably, the bold equation is the dataset ELBO objective at domain $t$ introduced in Theorem~\ref{thm:th2}. By utilizing the connections established in Eq.~\ref{eq:dy_joint} and \ref{eq:single_joint}, we can derive the joint distribution across all source domains.

\begin{align}
    log p(x_{1:T}) &= \sum_{t=1}^T \mathbb E_{x_t \sim D_t}[\log p(x_{t})] \nonumber \\
    \ge&\sum_{t=1}^T \mathbb E_{q(z_c, z_t|x_t)}[\log p(x_t|z_c, z_t)] - KL[q(z_c,z_t)||p(z_c,z_t)] \nonumber \\
    =& \displaystyle\sum_{t=1}^T\mathbb E_{z_c, z_t}\big[ \log p(x_t|z_c, z_t) - \big(\displaystyle\sum_{t=1}^T KL[q(z_c|x_{1:T})||p(z_c)] \nonumber \\ &+\displaystyle\sum_{t=1}^T KL[q(z_t|z_{<t}, x_{t})||p(z_{t}|p(z_{<t}))]\big) \nonumber \\  &+ \displaystyle\sum_{t=1}^T (I_q (z_c;x_x) + I_q (z_t;x_t) - I_q (z_t;z_c))\big] \nonumber \\
    \ge& \displaystyle\sum_{t=1}^T\mathbb E_{z_c, z_t}\big[ [\log p(x_t|z_c, z_t)] - (KL[q(z_c|x_{1:T})||p(z_c)] \nonumber \\ 
    &+ KL[q(z_t|z_{<t}, x_{t})||p(z_{t}|p(z_{<t})]) \nonumber \\  
    &+ (I_q (z_c;x_x) + I_q (z_t;x_t) - I_q (z_t;z_c)) \big]
\end{align}
Since each data sample is provided in the source domains and assumed to be uniformly distributed, we can omit the term $\mathbb{E}_{x_t \sim D_t}$. The final inequality is a result of Jensen's inequality. It is worth noting that the left-hand side of the first inequality corresponds to the dataset ELBO objective at domain $t$, as introduced in Theorem~\ref{thm:th2}, due to the Markov properties.

\subsection{Proof of Proposition~\ref{prop}}
We can follow the proof of Theorem 2 in \citep{DBLP:conf/icml/Qin0L22} to have a new ELBO for $p(x_{1:T},y_{1:T}).$ Specifically, we have the joint distribution of data and latent variables:
\begin{equation}
    \begin{aligned}
        & p(x_{1:T},y_{1:T},z_c,z_{1:T},w_{1:T}) \\
        &=  \prod_{t=1}^T p(x_t, y_t|z_c,z_t,w_t) p(z_c) p(z_t|z_{<t}) p(w_t|w_{<t}) \\
        &= \prod_{t=1}^T p(z_t|z_{<t}) p(w_t|w_{<t}) \prod_{i=1}^{N_t} p(z_c) p(x_{i}|z_c, z_t)p(y_{i}|z_c,z_t, w_t) \\
    \end{aligned}
\end{equation}
where $p(z_1)=p(z_1|z_0)$ and $p(w_1)=p(w_1|w_0)$. Following the spirit of \citep{DBLP:conf/icml/Qin0L22} and equation Eq.~\ref{eq:single_elbo}, we can derive the following result:

\begin{equation}\footnotesize
\begin{aligned}
\log p(x_{1:T},y_{1:T}) &\ge \mathbb{E}_{q}\log\frac{\prod_{t=1}^T p(x_t, y_t|z_c,z_t,w_t) p(z_c) p(z_t|z_{<t}) p(w_t|w_{<t})} {\prod_{t=1}^T q(z_c) q(z_t|z_{<t}) q(w_t|w_{<t},y_t)} \\
&=  \mathbb{E}_{q} \Big[ \log\frac{\prod_{t=1}^T p(z_t|z_{<t})}{\prod_{t=1}^T q(z_t|z_{<t})} + \log\frac{\prod_{t=1}^T p(w_t|w_{<t})}{\prod_{t=1}^T q(w_t|w_{<t},y_t)} + 
\log\frac{\prod_{t=1}^T p(z_c)}{\prod_{t=1}^T q(z_c)} \\
&+\log\prod_{t=1}^T p(x_t|z_c,z_t)p(y_t|z_c,z_t,w_t) \Big] \\
&= \mathbb{E}_{q} \Big[ - \sum_{t=1}^T\log\frac{q(z_t|z_{<t})}{p(z_t|z_{<t})} - \sum_{t=1}^T\log\frac{q(w_t|w_{<t},y_t)}{p(w_t|w_{<t})} - \sum_{t=1}^T\log \frac{q(z_c)}{p(z_c)} \\
&+ \sum_{t=1}^T\log p(x_t|z_c,z_t)p(y_t|z_c,z_t,w_t) \Big]\\
\end{aligned}
\end{equation}
By applying Jensen's inequality and utilizing Eq.~\ref{eq:single_elbo}, we obtain the following:
\begin{align}
    \log p(x_{1:T},y_{1:T}) &\ge \displaystyle\sum_{t=1}^T\mathbb{E}_{q} \big[\log p(x_t|z_c, z_t) \log p(y_{t}|z_c,z_{t},w_t) \nonumber \\
    &- (KL[q(z_c|x_{1:T})||p(z_c)] \nonumber \\ &+ KL[q(z_t|z_{<t}, x_{t})||p(z_{t}|p(z_{<t})]) \nonumber \\  
    &+  KL (q(w_t|w_{<t},y_t),p(w_{t}|w_{<t}))) \nonumber \\  
    &+  (I_q (z_c;x_x) + I_q (z_t;x_t) - I_q (z_t;z_c))\big] ,
\end{align}

\newpage

\section{Additional Experimental Results}
Since some of our baselines are specifically designed to generalize to the first target domains rather than the sequence target domains, we also report the results of the first target domains in Table~\ref{tab:first_domain}. The table presents the experimental results in terms of accuracy on both synthetic and real-world datasets, allowing for a comparison of different methods. It is evident from the table that our proposed method, MISTS, outperforms the other baselines significantly across all domains.

In Table~\ref{tab:first_domain}, each row corresponds to a different algorithm, while each column represents a specific dataset such as Circle, Sine, Sine-C, Elec, RMNIST, Portraits, and Caltran. Mean values are reported, along with standard deviations indicated by the $\pm$ symbol. The "Avg" column provides the average performance across all datasets.

Analyzing the different algorithms, we can observe distinct trends. Invariant Learning methods achieve moderate performance across most datasets, while data augmentation methods demonstrate competitive performance across multiple datasets. However, our proposed method, MISTS, consistently outperforms all other algorithms, achieving the highest average accuracy across all datasets.

The results undeniably highlight the superior performance of our proposed method, MISTS, making it a promising approach to address the challenges posed by Evolving Domain Generalization.

\begin{table*}[ht]
\vskip 0.15in
\begin{center}
\caption{Experimental Results (Accuracy \%) on Synthetic and Real-World Datasets across Different Methods.}
\label{tab:first_domain}
\begin{normalsize}
% \begin{sc}
\begin{tabular}{c|cccccccc}
\toprule
\textbf{Algorithm} & \textbf{Circle} & \textbf{Sine}  & \textbf{Sine-C} & \textbf{Elec} & \textbf{RMNIST} &  \textbf{Portraits} & \textbf{Caltran}  & \textbf{Avg}\\
\midrule
ERM  & 56.7  $\pm$  3.2   & 73.7 $\pm$ 1.2 & 64.7 $\pm$ 4.2 & 69.7 $\pm$ 0.9 & 56.2 $\pm$ 1.0 & 74.2 $\pm$ 0.8 & 38.6 $\pm$ 3.1 &  62.0    \\
IRM   & 58.6 $\pm$ 2.5  & 68.3 $\pm$ 1.0 & 65.4 $\pm$ 3.7 &69.6 $\pm$ 1.0   & 48.2 $\pm$ 0.8 & 74.7 $\pm$ 0.9 & 47.4 $\pm$ 2.6 &  61.7  \\
Mixup  & 50.3 $\pm$ 2.1  & 64.7 $\pm$ 1.1 & 66.7 $\pm$ 2.8 &68.9 $\pm$ 0.9  & 52.3 $\pm$ 0.7 & 76.8 $\pm$ 0.8 & 55.3 $\pm$ 2.1  & 62.0   \\
LISA  & 50.8 $\pm$ 1.8   & 68.7 $\pm$ 0.8  & 68.3 $\pm$ 3.2  & 69.0 $\pm$ 1.1 & 54.1 $\pm$ 0.9 & 77.0 $\pm$ 0.9 & 56.1 $\pm$ 2.4  & 63.4  \\
CORAL  &60.4 $\pm$ 4.7  & 70.7 $\pm$ 1.5  & 67.2 $\pm$ 3.1  &69.2 $\pm$ 0.8 & 55.3 $\pm$ 0.8 & 75.1 $\pm$ 0.9 & 53.2 $\pm$ 3.5 &   64.4  \\
GroupDRO  & 63.2 $\pm$ 3.8  & 56.3 $\pm$ 0.7  & 63.4 $\pm$ 4.5  &67.5 $\pm$ 0.8  & 53.4 $\pm$ 1.0 & 76.3 $\pm$ 0.8 & 56.4 $\pm$ 3.3 & 62.4\\
\midrule
EWC  & 60.3 $\pm$ 3.6  & 85.1 $\pm$ 1.2  & 67.2 $\pm$ 3.3 & 70.1 $\pm$ 1.3  & 45.4 $\pm$ 3.7 & 77.7 $\pm$ 1.1 & 40.2 $\pm$ 3.2 & 63.7 \\
SI   & 61.1 $\pm$ 2.6 & 75.7 $\pm$ 1.5 & 68.6 $\pm$ 3.7  & 70.5 $\pm$ 1.1 &   47.1 $\pm$ 4.3 & 76.9 $\pm$ 1.0 & 43.4 $\pm$ 2.7  & 63.3  \\
\midrule
DRAIN  & 87.5 $\pm$ 3.7 & 80.5 $\pm$ 1.1 & 78.6 $\pm$ 1.1  & 70.5 $\pm$ 1.2 & 62.3 $\pm$ 1.4 & - & -    & - \\
LSSAE  & \textbf{91.5} $\pm$ \textbf{3.4} & 87.3 $\pm$ 0.9 & 63.3 $\pm$ 3.4   & 70.2 $\pm$ 0.8 & 64.1 $\pm$ 1.1 & 77.4 $\pm$ 1.5 & 59.4 $\pm$ 2.4    &  73.3 \\
\textbf{MISTS}  & 88.7 $\pm$ 3.1 & \textbf{95.2} $\pm$ \textbf{1.2} & \textbf{84.0} $\pm$ \textbf{2.0}   & \textbf{72.1} $\pm$ \textbf{1.0} & \textbf{64.3} $\pm$ \textbf{1.0} & \textbf{78.5} $\pm$ 0.9 & \textbf{62.6} $\pm$ \textbf{2.1} &  \textbf{77.9} \\

\bottomrule
\end{tabular}
% \end{sc}
\end{normalsize}
\end{center}
\vskip -0.1in
\end{table*}

\end{document}